\newtheorem{definition}{Definition}
\newtheorem{proposition}{Proposition}
\title{Yes, IoU loss \textbf{is} submodular -- \emph{as a function of the mispredictions}}
\author[1]{Maxim Berman\footnote{Authors are listed in alphabetical order.}}
\author[1]{Matthew B.\ Blaschko}
\author[1,2]{Amal Rannen Triki}
\author[3]{Jiaqian Yu}
\affil[1]{ESAT-PSI, KU Leuven, Belgium}
\affil[2]{Yonsei University, Seoul, Korea}
\affil[3]{Samsung R\&D Institute China-Beijing}
\begin{document}
\maketitle

\begin{abstract}
This note is a response to \cite{arXiv:1809.00593} in which it is claimed that \cite[Proposition~11]{Yu2015b} is false.  We demonstrate here that this assertion in \cite{arXiv:1809.00593} is false, and is based on a misreading of the notion of set membership in \cite[Proposition~11]{Yu2015b}. 
We maintain that \cite[Proposition~11]{Yu2015b} is true. 
\end{abstract}

Based on the empirical risk principle, one should minimize at training time the loss that one wishes to evaluate at test time \cite{DBLP:books/daglib/0097035}.  In \cite{yu:hal-01151823,Yu2015b}, we have studied the construction of surrogates for loss functions that are submodular \emph{with respect to the set of mispredictions}.  One such example  \cite[Proposition~11]{Yu2015b} is the Intersection over Union or IoU loss, perhaps more appropriately called the Jaccard loss as it is one minus the Jaccard index~\cite{doi:10.1111/j.1469-8137.1912.tb05611.x}.  This is particularly popular in the evaluation of image segmentation algorithms, and has gained popularity in the computer vision literature due to its inclusion in the evaluation criteria for benchmark challenges \cite{Everingham15}.

In recent years, we have devoted some effort to understanding the properties of the Jaccard index and its implications for the construction of loss surrogates, including in \cite{Yu2015b,yu:tel-01514162,Berman2018a}.  \cite{arXiv:1809.00593} claims that these results are false by challenging the correctness of \cite[Proposition~11]{Yu2015b}.  We demonstrate here that \cite{arXiv:1809.00593} makes an erroneous claim and maintain that our results are correct.

\section{Supermodularity of the Jaccard index set functions}

\begin{definition}[Set function \cite{Schrijver2003combinatorial}]
A set function $\ell$ is a mapping from the power set of a base set $V$ to the reals:
\begin{equation}
\ell : \mathcal{P}(V) \rightarrow \mathbb{R} .
\end{equation}
\end{definition}

\begin{definition}[Submodular set function \cite{fujishige2005submodular}]
  A set function $f$ is said to be submodular if for all $A \subseteq B \subset V$ and $x\in V\setminus B$,
  \begin{equation}
    f(A\cup \{x\}) -f(A) \geq f(B \cup \{x\}) - f(B) .
  \end{equation}
\end{definition}



In the following, we fix some arbitrary subset $G \subseteq V$ (the ground truth) and define $f(A) = \frac{|G \cap A|}{|G \cup A|}$ the Jaccard index for a given prediction $A$.

\begin{proposition}\thlabel{prop:JaccardNonSubmodular}
$f(A) = \frac{|G \cap A|}{|G \cup A|}$ is in general neither submodular nor supermodular (the negative of a submodular function).
\end{proposition}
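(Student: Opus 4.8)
The plan is to disprove both submodularity and supermodularity by exhibiting two small explicit counterexamples, so it suffices to work over a tiny base set where the Jaccard index can be evaluated by hand. Throughout I will fix the ground truth to be a single element, which guarantees $G \cup A \neq \emptyset$ for every $A$ and hence that $f$ is everywhere well defined; this sidesteps the only real pitfall, namely division by zero when both $G$ and $A$ are empty.

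Before searching for the sets, I would first record the marginal gain of adjoining a single new element $x \notin A$, splitting into two cases. If $x \in G$, the intersection grows by one while the union is unchanged, so the marginal gain is $\frac{1}{|G \cup A|}$, which is nonincreasing as $A$ grows and therefore never by itself violates submodularity. If $x \notin G$, the intersection is unchanged while the union grows by one, giving the negative marginal gain $-\frac{|G \cap A|}{(|G\cup A|+1)\,|G\cup A|}$. This formula is the key to the whole argument: its magnitude \emph{decreases} when one enlarges $A$ by adding further false positives (increasing the union) but \emph{increases} when one adds missing true positives (increasing the intersection). Each of these two competing effects will be turned into one of the two required violations.

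For the failure of submodularity I would pick $x \notin G$ and arrange for the penalty to shrink as the set grows: take $A$ with a nonempty intersection and a tight union, and let $B \supseteq A$ add a false positive so that $|G \cup B| > |G \cup A|$ while $|G \cap B| = |G \cap A|$. Concretely, with $G = \{1\}$, the choice $A = \{1\}$, $B = \{1,2\}$, $x = 3$ makes the marginal gain at $B$ strictly larger (less negative) than at $A$, contradicting the diminishing-returns inequality of the submodularity definition. For the failure of supermodularity I would instead exploit the opposite effect, comparing a set with empty intersection against one with nonempty intersection: with the same $G = \{1\}$, taking $A = \emptyset$, $B = \{1\}$, $x = 2$ yields a zero marginal gain at $A$ but a strictly negative one at $B$, violating the reverse inequality.

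I do not expect a genuine obstacle here, since the statement only asserts failure \emph{in general} and a single pair of counterexamples suffices; the only care needed is to keep every Jaccard quotient well defined (handled by $|G| = 1$) and to check that the chosen $A, B, x$ satisfy $A \subseteq B \subset V$ and $x \in V \setminus B$ so that they are legitimate instances of the submodularity inequality, which holds already over $V = \{1,2,3\}$. The remaining work is the routine arithmetic of evaluating $f$ on the six sets involved.
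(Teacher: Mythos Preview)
Your proposal is correct and follows the same overall strategy as the paper: analyse the marginal gain of adjoining a single element, split according to whether the new element lies in $G$, and exhibit explicit instances where the diminishing-returns inequality fails in each direction. The arithmetic checks out in both of your counterexamples.

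There is one organisational difference worth noting. The paper disproves supermodularity via the case $x \in G$, observing that the marginal gain $\frac{1}{|G \cup A|}$ is strictly decreasing when $B$ adds a false positive, and disproves submodularity via the case $x \notin G$ with $|G\cap A|=|G\cap B|=1$. You instead extract \emph{both} violations from the $x \notin G$ formula $-\frac{|G\cap A|}{(|G\cup A|+1)\,|G\cup A|}$: for submodularity you fix the intersection and grow the union (matching the paper), while for supermodularity you compare empty intersection at $A=\emptyset$ against positive intersection at $B=\{1\}$. This is a legitimate alternative and arguably tidier, since a single formula drives both directions and the examples live over the minimal base set $V=\{1,2,3\}$; the paper's route has the mild advantage that its supermodularity violation is a general parametric inequality rather than a single numerical instance.
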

\begin{proof}
Let $A \subset B \subset V$ and $x\in V\setminus B$.  
We may consider two cases: \begin{inparaenum}[(i)] \item $x\in G$ or \item $x \notin G$\end{inparaenum}.

Case (i): $x \in G$.
\begin{equation}
f(A \cup \{x\}) - f(A) - \left( f(B \cup \{x\}) - f(B) \right) = \frac{1}{|G \cup A|} - \frac{1}{|G \cup B|} > 0 ,
\end{equation}
which indicates that $f$ is not supermodular.

Case (ii): $x \notin G$. 
Consider the case that $|G \cap B| = |G\cap A| = 1$ and $|B| = |A|+1$.
\begin{align}
f(A \cup \{x\}) &- f(A) - \left( f(B \cup \{x\}) - f(B) \right) \nonumber \\ =& \frac{|G \cap A|}{|G \cup A| +1} - \frac{|G \cap A|}{|G \cup A|} - \left( \frac{|G \cap B|}{|G \cup B| +1} - \frac{|G \cap B|}{|G \cup B|} \right) \\ =& 
\frac{1}{|G \cup A| +1} - \frac{1}{|G \cup A|} - \left( \frac{1}{|G \cup A| +2} - \frac{1}{|G \cup A|+1} \right) < 0
, 
\end{align}
which indicates that $f$ is not submodular.
\end{proof}

\begin{center}
\begin{tikzpicture}
\def\radius{2cm}
\def\mycolorbox#1{\textcolor{#1}{\rule{2ex}{2ex}}}
\colorlet{colori}{blue!70}
\colorlet{colorii}{red!70}
\coordinate (ceni);
\coordinate[xshift=\radius] (cenii);
\draw[fill=colori,fill opacity=0.5] (ceni) circle (\radius);
\draw[fill=colorii,fill opacity=0.5] (cenii) circle (\radius);
\draw  ([xshift=-20pt,yshift=20pt]current bounding box.north west) 
  rectangle ([xshift=20pt,yshift=-20pt]current bounding box.south east);
\node[xshift=-.5\radius] at (ceni) {$G\setminus A$};
\node[xshift=.5\radius] at (cenii) {$A\setminus G$};
\node[xshift=.9\radius] at (ceni) {$G\cap A$};
\end{tikzpicture}
\end{center}

We define $M := (A \setminus G) \cup (G \setminus A) $, the symmetric difference, or \emph{set of mispredictions} in our context, between $A$ and $G$ (often denoted $A \triangle G$). 
We may uniquely recover $A$ from $M$ by $A = M \triangle G$. 
Let 
\begin{align}
g(M) := f(A) = f(M\triangle G) 
= \frac{|G \setminus M|}{|G \cup M|}
\end{align}
be the Jaccard index as a function of the set of mispredictions. 
\begin{proposition}[Proposition~11~\cite{Yu2015b}]\thlabel{prop:TransformedJaccardSupermodular}
$g(M) = \frac{|G \setminus M|}{|G \cup M|}$ is supermodular and its negative is therefore submodular.
\end{proposition}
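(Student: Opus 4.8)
The plan is to verify the supermodular inequality directly from the definition after reparametrizing $g$ by two integer counts. First I would fix $n := |G|$ and, for any misprediction set $M$, introduce $a := |M \cap G|$ and $b := |M \setminus G|$. Since $|G \setminus M| = n - a$ and $|G \cup M| = n + b$, this gives
\begin{equation}
g(M) = \frac{n-a}{n+b},
\end{equation}
so that $g$ depends on $M$ only through the pair $(a,b)$, both of which are monotone nondecreasing under set inclusion. The virtue of this reparametrization is that adjoining a single element to $M$ changes exactly one of the two counts by one.

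Next I would compute the discrete marginal $g(M \cup \{x\}) - g(M)$ for $x \in V \setminus M$, distinguishing the same two cases that appear in \thref{prop:JaccardNonSubmodular}. If $x \in G$, adjoining $x$ to $M$ raises $a$ by one and leaves $b$ fixed, yielding a marginal of $-\frac{1}{n+b}$. If $x \notin G$, it raises $b$ by one and leaves $a$ fixed, yielding $-\frac{n-a}{(n+b)(n+b+1)}$.

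To conclude I would take $A \subseteq B \subset V$ and $x \in V \setminus B$, and use the monotonicity $a_A \le a_B$, $b_A \le b_B$ to show that the marginal at $A$ is at most the marginal at $B$ in each case. For $x \in G$ this reduces to $-\frac{1}{n+b_A} \le -\frac{1}{n+b_B}$, which is immediate from $b_A \le b_B$. For $x \notin G$ the $A$-side fraction has the larger numerator $n - a_A \ge n - a_B \ge 0$ (recall $a \le n$ always) and the smaller positive denominator, so its negation is the smaller of the two marginals. In both cases $g(A \cup \{x\}) - g(A) \le g(B \cup \{x\}) - g(B)$, i.e.\ $g$ is supermodular and $-g$ is submodular.

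I anticipate no genuine difficulty beyond careful bookkeeping: the one thing that must be respected is that $x$ is adjoined to the \emph{misprediction} set $M$ and not to the prediction $A$, which is exactly what keeps both counts $a$ and $b$ monotone in $M$ and makes the reparametrization legitimate. This is precisely the point of set membership that the challenged claim overlooks, so I would state it explicitly before carrying out the two-case comparison.
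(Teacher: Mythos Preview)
Your argument is correct and follows essentially the same route as the paper's own proof: a direct verification of the supermodular inequality via the two cases $x \in G$ and $x \notin G$, using that both $|G\setminus M|$ and $|G\cup M|$ are monotone in $M$. Your reparametrization $g(M)=\frac{n-a}{n+b}$ is just a tidy bookkeeping device for the same comparison; the only cosmetic wrinkle is that you reuse the letters $A,B$ for misprediction sets (as in the submodularity definition) whereas the paper reserves $A$ for the prediction, so renaming them $M\subseteq N$ would avoid any clash.
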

\begin{proof}
Let $M \subset N \subset V$ and $x\in V\setminus N$. The following holds:
\begin{align}
|G \setminus N| \leq  |G \setminus M| 
\leq  |G| 
\leq |G\cup M| 
\leq  |G \cup N| . \label{eq:GcupAleqGcupB}
\end{align}

Case (i): $x\in G$.
\begin{equation}
g(M \cup \{x\}) - g(M) - \left( g(N \cup \{x\}) - g(N) \right) = -\frac{1}{|G\cup M|} + \frac{1}{|G \cup N|} \leq 0 .
\end{equation}

Case (ii): $x\notin G$.
\begin{eqnarray}
g(M \cup \{x\}) - g(M) - \left( g(N \cup \{x\}) - g(N) \right) = \frac{|G \setminus M|}{|G \cup M|+1} - \frac{|G \setminus M|}{|G \cup M|}  -  \frac{|G \setminus N|}{|G \cup N|+1} + \frac{|G \setminus N|}{|G \cup N|} 
\leq 0,
\end{eqnarray}
where the inequalities arise by application of \eqref{eq:GcupAleqGcupB}.
\end{proof}
We note that other authors have correctly studied submodular functions of the symmetric difference, e.g.~\cite{NIPS2015_5741,DBLP:journals/corr/abs-1712-08721}.

\section{Conclusion}

In \cite{arXiv:1809.00593} the set functions in \thref{prop:JaccardNonSubmodular} and \thref{prop:TransformedJaccardSupermodular} are conflated leading to the incorrect conclusion that \cite[Proposition~11]{Yu2015b} (which coincides with \thref{prop:TransformedJaccardSupermodular}) is false.  \cite{arXiv:1809.00593} furthermore claims that \cite{Berman2018a} therefore contains errors, but we see that this deduction is based on a false premise.  \cite[Equation~(9)]{Yu2015b} and \cite[Equation~(5)]{Berman2018a} explicitly indicate the correct construction of the set function.  We remain confident in the correctness of \cite{Yu2015b,Berman2018a} and other recent work that builds on \cite{Yu2015b} including \cite{Berman2018b}.

\bibliographystyle{plain}
\bibliography{biblio}

\end{document}